\title{Semi-Myopic Sensing Plans for Value Optimization}
\author{  {\bf David Tolpin} \\
Computer Science Dept. \\  
Ben-Gurion University \\ 
84105 Beer-Sheva, Israel \\ 
\And
 {\bf Solomon Eyal Shimony}  \\ 
Computer Science Dept. \\  
Ben-Gurion University \\ 
84105 Beer-Sheva, Israel \\ 
}
\newtheorem{thm}{Theorem}
\newtheorem*{dfn}{Definition}
\begin{document} 
 
\maketitle 

\begin{abstract} 
We consider the following sequential decision problem. 
Given a set of items of unknown utility, we need to select one of as high
a utility as possible (``the selection problem'').  Measurements
(possibly noisy) of item values prior to selection are allowed, at a
known cost.  The goal is to optimize the overall sequential decision process
of measurements and selection.

Value of information (VOI) is a well-known scheme for
selecting measurements, but the intractability of the problem typically
leads  to using myopic VOI estimates.
In the selection problem, myopic VOI frequently badly underestimates the
value of information, leading to inferior sensing plans.
We relax the strict myopic assumption into a scheme we term semi-myopic,
providing a spectrum of methods that can improve the performance
of sensing plans. In particular, we propose the efficiently computable method of
``blinkered'' VOI, and examine theoretical bounds for special cases.
Empirical evaluation  of ``blinkered'' VOI in the selection problem
with normally distributed item values shows that is
performs much better than pure myopic VOI.
\end{abstract} 

\section{INTRODUCTION} \label{sec:intro}

Decision-making under uncertainty is a domain with numerous important applications.
Since these problems are intractable in general, special cases are of interest. In this paper,
we examine the selection problem: given a set of
items of unknown utility (but a distribution of which is known),
we need to select an item with as high a utility as possible. 
Measurements (possibly noisy) of item values prior to selection are allowed, at a
known cost.  The problem is to optimize the overall decision process
of measurement and selection. Even with the severe
restrictions imposed by the above setting, this decision problem
is intractable  \cite{RadovilskyOSS}; and yet it is important to be
able to solve, at least approximately, as it has several potential applications, such as
sensor network planning, and oil exploration.

Other settings where this problem is applicable
are in considering which time-consuming deliberation steps to perform
(meta-reasoning \cite{Russell.right}) before selecting an action,
locating a point of high temperature using a limited number
of measurements (with dependencies between locations as in
\cite{Guestrin.graphical}),
and the problem of finding a good set of parameters
for setting up an industrial imaging system.
The latter is actually the original motivation for this research, and
is briefly discussed in section \ref{sec:real-data}.

A widely adopted scheme for selecting measurements, also called sensing actions
in some contexts (or deliberation steps, in the context of meta-reasoning) is based on value
of information (VOI) \cite{Russell.right}.  Computing value of information
is intractable in general, thus both researches and practitioners
often use various forms of myopic VOI estimates \cite{Russell.right} coupled with
greedy search.
Even when not based on solid theoretical guarantees, such estimates lead to
practically efficient solutions in many cases.

However, in a selection problem involving real-valued
items, the main focus of this paper, coupled with the capability
of the system to perform more than one measurement for
each item, the myopic VOI estimate can be shown to badly underestimate
the value of information. This can lead to inferior measurement
sequences, due to the fact that in many
cases no measurement is seen to have a VOI estimate greater than its cost,
due to the myopic approximation.
Our goal is to find a scheme that, while still
efficiently computable, can overcome this limitation of myopic VOI.
We propose the framework of semi-myopic VOI, which includes the myopic
VOI as the simplest special case, but also much more general schemes such as
measurement batching, and exhaustive subset selection at the other extreme.
Within this framework we propose the ``blinkered'' VOI estimate, a variant
of measurement batching, as one that is efficiently computable and
yet performs much better than myopic VOI for the selection problem.

The rest of the paper is organized as follows.
We begin with a formal definition of our version of the
selection problem and other preliminaries. We then examine
a pathological case of myopic VOI, and
present our framework of semi-myopic VOI.
The ``blinkered'' VOI is then defined as scheme within the framework, followed by theoretical
bounds for some simple special cases. Empirical results, comparing
different VOI schemes to blinkered VOI, further support using this
cheme in the selection problem. We conclude with a discussion of
closely related work.

\section {BACKGROUND}

We begin by formally defining the selection problem, followed by
a description of the standard myopic VOI approach for solving it.
 
\subsection {The Optimization Problem} \label {subsec:bg-op}

The selection problem is defined as follows. Given

\begin{itemize}
\item A set $S=\{s_1,\,s_2,\,...,\,s_n \}$ of $n$ items;
\item initial beliefs (probability distribution) about item values
  $Bel(S) = p(s_1=x_1, \,...,\,s_n=x_n)$;
\item utility function $u: {\cal R} \rightarrow {\cal R}$;
\item a cost function $c: {\cal N} \rightarrow {\cal R}$ 
defining the cost of a single measurement of item $i$;
\item a budget or maximum allowed number of measurements $m$;
\item a measurement model, i.e. a probability distribution of observation outcome given
  the true value of each item $p_o(y|x)$;
\end{itemize}

find a policy that maximizes expected net utility $U^{net}$
of the selection --- utility of the selected item less the cost of
the measurements. Although in practice we allow different types
of measurements (i.e. with different cost and measurement error model) we
assume initially for simplicity that all measurements of an item are i.i.d. given
the item value. Thus, if item $s_i$ is chosen after measurement sequence $(M_1,...,M_{m^\prime})$,
and the true value of $s_i$ is $x_i$,  the net result $U^{net}$ is:

\begin{equation}\label{eq:bg-netresult}
U^{net} = u(x_i)-\sum_{j=1}^{m^\prime} c(M_j) 
\end{equation}

We assume that the posterior beliefs $p^+(s_1=x_1, s_2=x_2,\,...,\,s_n=x_n|y)$;
and the marginal posterior beliefs
$p^+(s_i=x|y)$ about an item value can be computed efficiently.
More specifically, we usually represent the distribution using a structured probability
model, such as a Bayes network or Markov network. The assumption is that
either the structure or distribution of the network is such that belief updating is easy, 
that the network is sufficiently small, or that the network is such that
an approximate and efficient belief-updating
algorithm (such as loopy belief updating) provides a good approximation.
Observe that this assumption does not make the selection problem tractable,
as even in a chain-structured network the selection problem is NP-hard \cite{RadovilskyOSS}.
In fact, even when assuming that the beliefs about items are independent
(as we will do for much of the sequel), the problem is still hard.

\subsection {Limited Rationality Approach} \label {subseq:bg-lra}

In its most general setting, the selection problem can be modeled as
a (continuous state) indefinite-horizon POMDP \cite{HansenPOMDP}, which is badly intractable.
Following \cite{Russell.right}, we thus use a
greedy scheme that:

\begin{itemize}
\item Chooses at each step a measurement with the greatest value of information (VOI), performing
belief updating after getting the respective observation,
\item stops when no measurement with a positive VOI exists, and
\item selects item $s_\alpha$ with the greatest expected utility:
\begin{equation}\label{eq:bg-eu}
E(U_\alpha) = \int\limits_{-\infty}^\infty p_\alpha(x)u(x)dx
\end{equation}
\end{itemize}

VOI of a measurement $M_j$ is defined as follows: denote by $E(U_i^{j+})$
the expected net utility of item $s_i$ after measurement $j$ and a subsequent
optimal measurement plan. Let $s_{\alpha^-}$ be the item that currently has the
greatest expected net utility  $E(U_{\alpha^-})$. Likewise, let
$s_{\alpha^j+}$ be the item with the greatest expected utility $E(U_{\alpha^{j+}}^{j+})$ 
{\em after} a measurement plan beginning with observation $j$. Then:

\begin{equation}\label{eq:bg-voi}
VOI(M_j) = E\left(U_{\alpha^{j+}}^{j+}\right)-E\left(U_{\alpha^-}^{j+}\right)
\end{equation}

\subsection {Myopic Value of Information Estimate} \label {subsec:bg-mvi}

Computing value of information of a measurement is intractable, and
is thus usually estimated instead under the assumptions of {\it meta-greediness}
and {\it single-step}. Under these assumptions, the myopic scheme
considers only one measurement step, and
ignores value of later measurements. A measurement
step can consist of a single measurement or of a fixed number thereof
with no deliberation between the measurements.

A measurement is  beneficial only if it changes which item
appears to have the greatest estimated expected utility. 
For items that are mutually independent (essentially the {\em subtree independence}
assumption of  \cite{Russell.right}), a measurement only affects beliefs
about the measured item. If the measured item does not seem to be the best now, but can
become better than the current best item $\alpha$ when the belief is
updated, the benefit in this case is:

\begin{equation}\label{eq:mvi-benefit-alpha}
B_i(y)=\max \left(\int\limits_{-\infty}^\infty u(x)p_i^+(x|y)dx-E(U_\alpha),0\right)
\end{equation}

If the measured item $\alpha $ can become worse
  than the next-to-best item $\beta$, the benefit is:

\begin{equation}\label{eq:mvi-benefit-beta}
B_i(y)=\max \left(E(U_\beta)-\int\limits_{-\infty}^\infty u(x)p_i^+(x|y)dx,0 \right)
\end{equation}
where $y$ is the observed outcome, and $p_i^+(x|y)$ is the posterior belief.

For these two cases, the myopic VOI estimate $MVI$ of observing the
$i$th item at step $j$ of the algorithm is:

\begin{equation} \label{eq:bg-mvi}
MVI_i = \int\limits_{-\infty}^{\infty}p_i^-(x)\int\limits_{-\infty}^{\infty}B_i(y)p_o(y|x)dydx-c(j)
\end{equation}

%






\subsection {Myopic Scheme: Shortcomings} \label{sec:mvilim}

The decisions the myopic scheme makes at each step are:
which measurement is the most valuable, and
whether the most valuable measurement has a positive value.
The simplifying assumptions are justified when they
lead to decisions sufficiently close to optimal in
terms of the performance measure, the net utility.

The first decision controls search ``direction''.  When it is wrong,
a non-optimal item is measured, and thus more
measurements are done before arriving at a final decision. The net utility decreases due
to the higher costs.
The second decision determines when the algorithm terminates, and can
be erroneously made too late or too early. Made too late, it leads to
the same deficiency as above: the measurement cost is too
high. Made too early, it causes early and potentially incorrect item selection,
due to wrong beliefs. The net utility
decreases because the item's utility is low.

In terms of value of information, the assumptions lead to correct
decisions when the value of information is negative for every sequence of steps,
or if there exists a measurement that according to the
meta-greedy approach has the greatest (positive) VOI estimate,
and is the head of an optional measurement plan.
These criteria are related to the notion of {\it non-increasing returns};
the assumptions are based on an implicit hypothesis that
the intrinsic value grows more slowly than the cost. When the hypothesis is
correct, the assumptions should work well; otherwise, the myopic scheme
either gets stuck or goes the wrong way.

It is commonly believed that many processes exhibit diminishing
returns; the {\it law of diminishing returns} is considered a universal
law in economics \cite{Johns.economics}. However, this only holds
asymptotically: while it is often true that starting with some point
in time the returns never grow, until that point they can alternate
between increases and decreases.
Sigmoid-shaped returns were discovered in marketing \cite{Johansson.s-curve}.
As experimental results show \cite{Zilberstein.sensing}, they are not uncommon in sensing
and planning. In such cases, an approach that can deal with increasing
returns must be used.

\subsection {Pathological Example} \label{subsec:mvilim-ex}

Let us examine a simple example where the myopic estimate behaves poorly:

\begin {itemize}
\item $S$ is a set of two items, $s_1$ and $s_2$, where
the value of $s_1$ is known exactly, $x_1=1$;
\item the prior belief about the value of $s_2$ is a normal distribution $p_2(x)=N(x;0,1)$;
\item the observation variance is $\sigma_o=5$;
\item the observation cost is constant, and chosen so that the net value estimate of
  a two-observation step is zero: $c(j)=\frac {MVI_2^2} 2 \approx 0.00144$;
\item the utility is a step function:

\[u(x) = \left\{ 
\begin{array}{l l}
  0 & \quad \mbox{if $x<1$}\\
  0.5 & \quad \mbox{if $x=1$}\\
  1 & \quad \mbox{if $x>1$}\\
\end{array} \right. \]

\end {itemize}

The plot in Figure \ref{fig:mvilim-value-cost} is computed according
to belief update formulas for normally distributed beliefs
and presents dependency of the intrinsic value estimate on the number
of observations in a single step. The straight line corresponds to the
measurement costs.

\begin{figure}[h]
\includegraphics[scale=0.75,trim=0pt 15pt 0pt 45pt,clip]{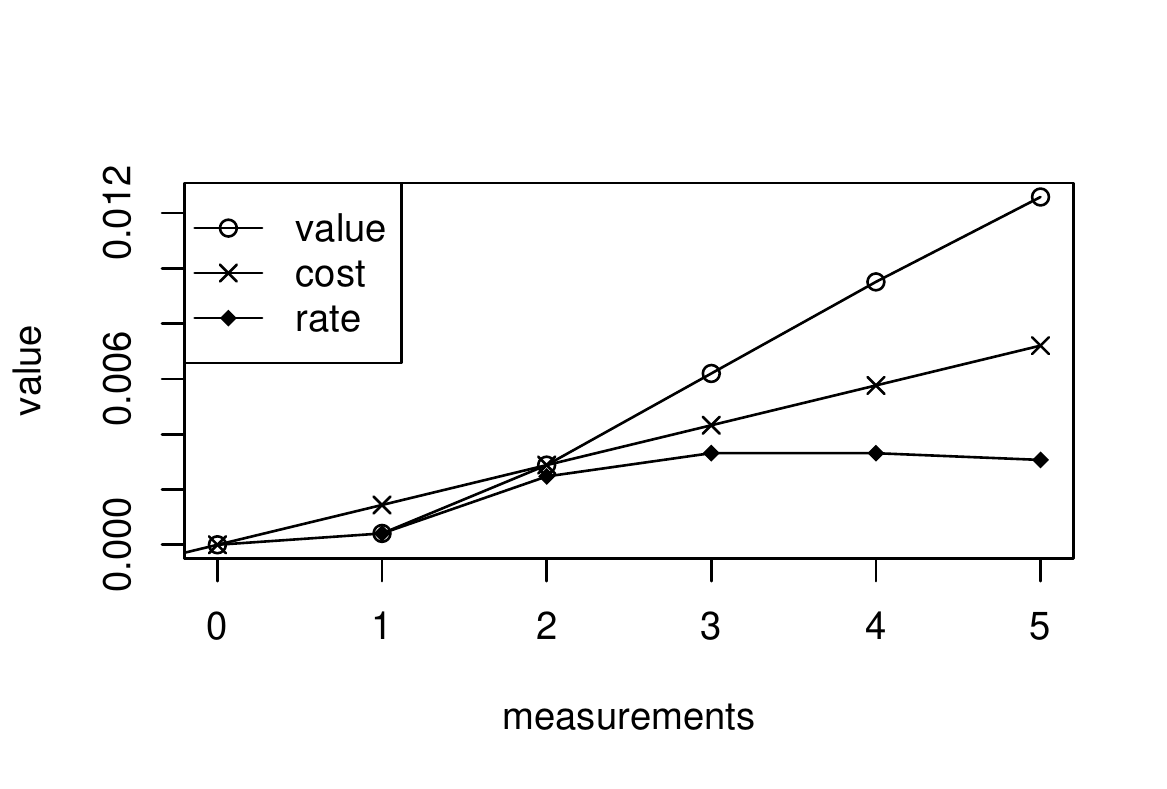}
\caption{Intrinsic value and measurement cost} 
\label{fig:mvilim-value-cost}
\end{figure} 

Under these conditions, the algorithm with one measurement per step
will terminate without gathering evidence because the value estimate
of the first step is negative, and will return item $s_1$ as
best. However, observing $s_2$ several times in a row has
a positive value, and the updated expected utility of $s_2$ can eventually
become greater than $u(s_1)$.
Figure \ref{fig:mvilim-value-cost}  also shows the intrinsic value growth rate
as a function of the number of measurements: it increases up to
a maximum at 3 measurements, and then goes down. Apparently,
the myopic scheme does not ``see'' as far as the initial increase.

\section{SEMI-MYOPIC VOI ESTIMATES} \label{sec:semimvi}

The above pathological example was inspired by a phenomenon we
encountered in a real-world problem, optimizing parameter in setups of imaging machines.
On data with varying prior beliefs, the myopic scheme almost never measured  an item
with high prior variance even when it was likely to become the best item
after a sufficient number of measurements.
 

Keeping the complexity manageable (the number of possible sensing plans
over continuous variables is uncountably infinite, in addition to being multi-dimensional)
while overcoming the premature termination is the basis for the
{\em semi-myopic} value of information estimate. Consider the set of all
possible measurement actions ${\cal M}$. Let $C$ be a constraint over
sets of measurements from ${\cal M}$. In the semi-myopic framework, we consider all
possible subsets $B$ of measurents from ${\cal M}$ that obey the constraint $C$,
and for each such subset $B$ compute a ``batch'' VOI that assumes that
all measurements in $B$ are made, followed by a decision (selection
of an item in our case). Then, the batch $B*$ with best estimated VOI is
chosen. Once the best batch $B*$ is chosen, there are still several options:
\begin{enumerate}
\item Actually do all the measurements in $B*$.
\item Attempt to optimize $B*$ into some form of conditional plan
of measurements and the resulting observations.
\item Perform the best measurement in $B*$.
\end{enumerate}

In all cases, after measurements are made, the
selection is repeated, until no batch has a positive net VOI,
at which point the algorithm {\em terminates} and selects an item.
Although we have experimented with option 1 for comparative purposes,
we did not further pursue it as empirical performance
was poor, and opted for option 3 in this paper\footnote{Option 2 is intractable in general, and
while limited efficient optimization may be possible, this issue is beyond the scope
of this paper.}. 
Observe that the constraint $C$ is
crucial. For an empty constraint (called the {\em exhaustive} scheme),
all possible measurement sets are considered.
Note that this has an exponential computational cost, while {\em still} not guaranteeing
optimality.
At the other extreme, the constraint is that only singleton
sets be allowed,
resulting in the greedy single-step assumption, which we call the {\em pure myopic} scheme.

The myopic estimate can be extended to the case when a
single step consists of $k\ge 1$ measurements of a single item
$s_i$. We denote the estimate of such a $k$-measurements step by
$MVI_i^k$. Our main contribution is thus the case where the constraint is that all measurements
be for the same item, which we call the {\em blinkered} scheme.
Yet another scheme we examine is where the constraint allows at most
one measurement for each item (thus allowing from zero to
$n$ measurements in a batch), called the {\em omni-myopic} scheme.

\subsection {Blinkered Value of Information} \label{subsec:bvi}

As stated above, the blinkered scheme considers 
sets of measurements that are all for the same item; this constitutes unlimited lookahead,
but along a single ``direction'', as if we ``had our blinkers on''.
That is, the VOI is estimated for any number of
independent observations of a single item. 
Although this scheme has a significant
computational overhead over pure-myopic, the factor is only
linear in the maximum budget. \footnote{This complexity assumes either normal distributions, or some other
distribution that has compact statistics. For general distributions,
sets of observations may provide information beyond the mean and variance,
and the resources required to compute VOI may even be exponential in the number of measurements.}
We define the ``blinkered'' value of information as:

\begin{equation}\label{eq:bvi} BVI=\max_k MVI^k \end{equation}

Driven by this estimate, the blinkered scheme selects a single measurement of
the item where {\em some} number of measurements gain the greatest VOI.
A single step is expected not to achieve the value, but to be
just the first one in the right direction. Thus, the estimate relaxes
the {\it single-step} assumption, while still underestimating the
value of information.

For finite budget $m$ the time to compute the estimate $T_{BVI}$ is:
$T_{BVI}=O\left(T_{MVI}m\right)$.  
If $MVI^k$ is a unimodal function of $k$, which can be shown for some
forms of distributions and utility functions,
the time is only logarithmic in $m$ using bisection search.


\subsection {Theoretical Bounds} \label {subsec:bvi-bounds}

We establish bounds on the blinkered scheme
for two special cases,
beginning with the termination condition for the case of only 2 items.


\begin{thm}\label{th:bound-single} Let $S=\{ s_1, s_2\} $, where the value of $s_1$ 
is known exactly. Let $m_b$ be the remaining budget of measurements when
the blinkered scheme terminates, and $C$ be the (positive) cost
of each measurement. Then the (expected) value of information that
can be achieved by an optimal policy from this point onwards is at most $m_bC$.
\end{thm}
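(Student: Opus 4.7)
The plan is to reduce the problem to a martingale bound combined with the blinkered termination condition. First, since the value of $s_1$ is known exactly, measuring it yields no information and only burns cost; so without loss of generality any candidate policy $\pi$ performs only measurements of $s_2$, and its (random) total number of measurements $N_\pi$ satisfies $N_\pi\le m_b$.

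Next, I would compare the gross VOI of $\pi$ (the expected gain in selected-item utility before subtracting measurement cost) against that of the fixed non-adaptive plan that uses all $m_b$ measurements on $s_2$. Let $f_k = E[u(X_2)\mid Y_1,\ldots,Y_k]$ denote the posterior expected utility of $s_2$ after $k$ i.i.d.\ observations. After running $\pi$, the optimal selection achieves expected utility $\max(f_{N_\pi},u(x_1))$, because $s_1$ is known to have utility $u(x_1)$. The sequence $\{f_k\}$ is a martingale in the observation filtration $\{{\cal F}_k\}$, and $N_\pi\le m_b$ is a bounded stopping time, so $f_{N_\pi} = E[f_{m_b}\mid {\cal F}_{N_\pi}]$. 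Applying Jensen's inequality to the convex function $\max(\cdot,u(x_1))$ conditionally on ${\cal F}_{N_\pi}$ and then taking expectations gives
\[
E\bigl[\max(f_{N_\pi},u(x_1))\bigr]\le E\bigl[\max(f_{m_b},u(x_1))\bigr],
\]
so the gross VOI of $\pi$ is bounded above by the gross VOI of the $m_b$-step non-adaptive plan that uses the full remaining budget on $s_2$.

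Finally, I would invoke the termination condition: blinkered having stopped with budget $m_b$ means $MVI_2^k\le 0$ for every $k\le m_b$, and in particular $MVI_2^{m_b}\le 0$, which by the definition of $MVI^k$ says the gross VOI of the $m_b$-measurement non-adaptive plan is at most $m_b C$. Chaining the two inequalities bounds the gross VOI of any admissible $\pi$ by $m_b C$; the net VOI (gross minus expected cost $E[N_\pi]\,C$) is no larger, which gives the claim. The main delicate step is the Jensen/optional-stopping argument: it relies on $\{f_k\}$ being a martingale (immediate from the tower property, since observations are i.i.d.\ given $X_2$) and on $\max(\cdot,u(x_1))$ being convex (it is, as a pointwise maximum of affine functions), so once these are noted the rest is definition-chasing.
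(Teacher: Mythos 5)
Your proof is correct and follows the same route as the paper's: bound the optimal policy's intrinsic value by that of spending the entire remaining budget on $s_2$, use the termination condition to bound the latter by $m_bC$, and note that net value cannot exceed intrinsic value since costs are positive. The only difference is that where the paper simply asserts that an optimal policy's intrinsic value is at most that of making all the measurements, you supply a rigorous justification via optional stopping and Jensen's inequality for the convex function $\max(\cdot,u(x_1))$, which also properly covers adaptive stopping rules.
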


\begin{proof}[Proof] 
The intrinsic value of information of the remaining budget when the blinkered
scheme terminates  is $V_b^{int} \leq m_bC$, since otherwise it would not have
terminated. Since there is only one type of measurement, the intrinsic value of
information $V_o^{int}$ achieved by an optimal policy must be at most equal
to making all the measurements, thus $V_o^{int} \leq V_b^{int} \leq m_bC $.
Since measurement costs are positive, the net value of information  $V_o^{net}$ achieved
by the optimal policy must therefore also be at most $m_bC $.
\end {proof}

This bound is asymptotically tight. This can be shown by having a measurement
model with dependencies such that the first measurements do not change the
expected utilities of the item, but provide information on whether it is worthwhile
to perform additional measurements.





The second bound is related
to a common case of a finite budget and free measurements. It
provides certain performance guarantees for the case when 
dependencies between items are sufficiently weak.

\begin{dfn} \label{dfn:mutually-submodular} Measurements of two items $s_1$, $s_2$ are 
mutually submodular if, given sets of measurements
of each item $M_1$ and $M_2$, the intrinsic value of information of the
union of the set is not greater than the sum of the intrinsic values of
each of the sets, i.e.: $V^{int}(M_1 \cup M_2) \le V^{int}(M_1) + V^{int}(M_2)$
\end{dfn}

\begin{thm} \label{th:bound-multiple} For a set of $n$ items,
measurement cost $C=0$, and a finite budget of $m$ measurements, if
measurements of every two items are mutually submodular, the
value of information collected by the blinkered scheme is no more than a factor of $n$
worse than the value of information collected by an optimal measurement plan.
\end {thm}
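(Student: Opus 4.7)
The plan is to bound the intrinsic value of the optimal plan above by $n \cdot BVI$ and then argue that the blinkered scheme itself accumulates intrinsic value of at least $BVI$, so its collected VOI is within a factor of $n$ of optimal.

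I would proceed as follows. Let $M^*$ be the measurement multiset produced by an optimal policy (of size at most $m$), and partition it by item as $M^* = \bigcup_{i=1}^{n} M_i^*$, where $M_i^*$ collects the measurements of item $s_i$ inside $M^*$. Iterating the pairwise mutual submodularity of Definition \ref{dfn:mutually-submodular} along this partition would yield the $n$-way subadditivity $V^{int}(M^*) \le \sum_{i=1}^{n} V^{int}(M_i^*)$. Each $M_i^*$ is a batch of $k_i \le m$ i.i.d.\ measurements of a single item, which is exactly what the blinkered estimate considers, so $V^{int}(M_i^*) = MVI_i^{k_i} \le \max_k MVI_i^k \le BVI$ by the definition in equation \eqref{eq:bvi}. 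Summing over the $n$ items gives $V^{int}(M^*) \le n \cdot BVI$. Finally, because $C = 0$ and the intrinsic value of every measurement set is non-negative, the blinkered scheme, having committed to the item $i^{*}$ and batch size $k^{*}$ realizing $BVI$, collects intrinsic value of at least $BVI$ along that direction; any mid-course reselection only switches to a direction whose current blinkered estimate is at least as large, so the total accumulated value dominates the first-step $BVI$.

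The step I expect to be the main obstacle is the extension of the pairwise assumption to $n$-way subadditivity. Definition \ref{dfn:mutually-submodular} only constrains unions indexed by two items at a time, so the induction on $n$ must fold one per-item set into a growing union of already-grouped per-item sets, effectively treating $V^{int}$ as a subadditive set function with per-item blocks as ground elements, which is slightly stronger than the stated pairwise property. I would either make this extended subadditivity explicit as a natural strengthening of the definition, or derive it from monotonicity of $V^{int}$ together with the independence-type properties typically available in the VOI setting (for independent items, the intrinsic value is often genuinely submodular over the full ground set of measurements). A smaller technical point is to carefully connect the one-shot $BVI$ at the first planning step to the total intrinsic value actually collected by the adaptive, re-planning scheme; this follows cleanly from $C = 0$ and monotonicity, but deserves an explicit sentence in the final write-up.
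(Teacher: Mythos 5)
Your argument is correct and rests on the same three ingredients as the paper's proof (subadditivity of intrinsic value across items, the fact that a single-item batch is exactly what the blinkered estimate evaluates, and the claim that the adaptive scheme actually collects its first-step estimate), but the decomposition is slightly different. The paper does not decompose the optimal plan $M^*$ directly; instead it first passes, via monotonicity of intrinsic VOI, to an \emph{exhaustive} plan $V_e$ consisting of $m$ measurements of every item ($V_o \le V_e$), and only then applies submodularity to the $n$ per-item blocks of that fixed plan to get $V_e \le n V_{bimax}$. Your route skips the exhaustive plan and applies subadditivity to the per-item blocks $M_i^*$ of the optimal plan itself, which buys you a marginally tighter intermediate bound ($\sum_i MVI_i^{k_i}$ with $k_i \le m$) and removes the need for the monotonicity step in the upper-bound half of the argument; the paper's route buys uniformity (every block has exactly $m$ measurements, matching its definition of $V_{b,i}$). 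The two soft spots you identify are shared by the paper and are real: Definition of mutual submodularity is stated only for pairs of single-item measurement sets, so the $n$-way subadditivity $V^{int}\bigl(\bigcup_i M_i\bigr) \le \sum_i V^{int}(M_i)$ used by both proofs is an unstated strengthening (the paper invokes it silently in the line ``$V_e \le n V_{bimax}$''), and the final step --- that the re-planning, one-measurement-at-a-time blinkered scheme accumulates at least its initial batch estimate --- is asserted in one sentence in the paper exactly as loosely as in your write-up. You are right that both points deserve explicit treatment; your proposal is, if anything, more candid about them than the published proof.
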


\begin{proof} Since the measurement cost is $0$, $V^{net}=V^{int}$,
we omit the superscript in the proof.  Expected value of
information cannot decrease by making additional measurements,
therefore the value of any set of measurements containing the set
of measurements in an optimal plan is at least as high as the
value $V_o$ of an optimal plan. Consider an exhaustive plan
containing $m$ measurements of each of the $n$ items, $mn$ measurements
total with value $V_e$. The exhaustive plan contains all measurements that can
be made by optimal plan within the budget, thus $V_o\le V_e$.

Let $s_i$ be the item with the highest blinkered value for $m$
measurements, denote its value by 
$V_{bimax}=\max_i V_{b,i}$. Since measurements of different items are mutually
submodular, $V_{e} \le nV_{bimax}$, 
and thus $V_{bimax}\ge \frac {V_o} n$.

The blinkered scheme selects at every step a measurement from a plan
with value of information which is at least as high as the
value of measurements of $s_i$ for the rest of the
budget. Thus, its value of information $V_b \ge V_{bimax}
\ge \frac {V_o} n$.
\end{proof}

The bound is asymptotically tight. Construct a problem instance
as follows: $n$ items, with expected values of information $v_j(i)$,
for $i$ measurements, respectively, defined below.
Value of information of a combination of the measurements
is the sum of the values for each item, $v(i)=\sum_{j=1}^n v_j(i_j)$, with the following
value of information functions:

$$v_1(i) = \sqrt[k] {\frac i m}$$

\[v_{j>1}(i) = \left\{ 
\begin{array}{l l}
  0 & \quad \mbox{if $i<\frac m n$}\\
  \sqrt[k] {\frac 1 n} & \quad \mbox{otherwise}\\
\end{array} \right. \]

Here, the optimal policy is to measure each item $\frac m n$ times. The resulting
value of information for $m$ measurements is:

\[
V_o = v_1\left(\frac m n\right)+(n-1)v_{j>1}\left(\frac m n\right) = n \sqrt[k] {\frac 1 n}
\]
\[\lim_{k\to\infty}n \sqrt[k] {\frac 1 n} = n
\]

But the blinkered algorithm will always choose the first item with $V_b = v_1(m) = 1$.

\section {EMPIRICAL EVALUATION} \label {sec:exprm}

It is rather difficult to perform informative experiments
on a real setting of the selection problem. Therefore,
other than one case coming from a parameter selection application,
empirical results in this paper are for artificially generated
data.





\subsection {Simulated Runs} \label {subsec:exprm-runs}

The first set of experiments is for independent normally distributed
items. For a given number of items, we randomly generate their exact
values and initial prior beliefs. Then, for a range of measurement costs,
budgets, and observation precisions, we run the search, randomly
generating observation outcomes according to the exact values of the
items and the measurement model. The performance measure  is
the regret - the difference between the utility of the best item and the utility
of the selected item, taking into account the measurement costs.
We examine the result of using the
blinkered scheme vs. other semi-myopic schemes.

The first comparison is the difference in regret between the myopic and blinkered
schemes, done for 2 items, one of which has a known value
(Table \ref{tbl:n2-t5}). A positive values in the cells, indicates
an improvement due to the blinkered estimate. Note that
the absolute value is bounded by 0.5, the difference in the
utility of the exactly known item and the extremal utility.

\begin{table}[h] 
\begin{center} 
\begin{tabular}{l|r r r r}
$\sigma_o ~~ \backslash ~~C$ & 0.0005  &  0.0010 & 0.0015 & 0.0020 \\ \hline
3        & 0.0147 & 0.0156 & 0.0199 & 0.2648 \\
4        & 0.0619 & 0.2324 & 0.2978 & 0.2137 \\
5        & 0.2526 & 0.2322 & 0.1729 & 0.1776 \\
6        & 0.1975 & 0.1762 & 0.1466 & 0.0000 \\
\end{tabular} 
\caption{2 items, 5 measurement budget} 
\label{tbl:n2-t5} 
\end{center} 
\end{table} 

\begin{table}[h] 
\begin{center} 
\begin{tabular}{l|r r r r}
$\sigma_o ~~ \backslash ~~C$ & 0.0005  &  0.0010 & 0.0015 & 0.0020 \\ \hline
3            & 0.0113 & -0.00459 & -0.0024 & 0.4352 \\
4            & 0.0374 &  0.43435 &  0.4184 & 0.3902 \\
5            & 0.4060 &  0.40004 &  0.3534 & 0.3599 \\
6            & 0.4082 &  0.37804 &  0.3337 & 0.0000 \\
\end{tabular} 
\caption{4 items, 10 measurement budget} 
\label{tbl:n4-t10} 
\end{center} 
\end{table} 

Averaged over 100 runs of the experiment, the difference is
significantly positive for most combinations of the parameters. In the
first experiment (Table \ref{tbl:n2-t5}), the average regret of the myopic scheme compared
to the blinkered scheme is 0.15 with standard deviation 0.1.  In the
second experiment (Table \ref{tbl:n4-t10}), the regret is 0.27 with standard deviation 0.19.

\subsubsection {Other Semi-Myopic Estimates}

In this set of experiments,
we compare three semi-myopic schemes:  blinkered, omni-myopic, and exhaustive. 
All schemes were run on a set of 5 items with a 10 measurement
budget.
The results show that while blinkered is significantly better than pure myopic
(Table \ref{tbl:planhead-mb}), exhaustive is only marginally
better than blinkered (Table \ref{tbl:planhead-be}),
even though it requires evaluating an exponential number
of sets of measurements. Another semi-myopic scheme,
omni-myopic, is only marginally better than myopic (Table \ref{tbl:planhead-mo}).

\begin{table}[h] 
\begin{center} 
\begin{tabular}{l|r r r r}
$\sigma_o ~~\backslash ~~C$ & 0.0005  &  0.0010 & 0.0015 & 0.0020 \\ \hline
3.0 & -0.1477 & 0.0946 & 0.1889 & 0.2807 \\
4.0 & 0.0006 & 0.0382 & 0.4045 & 0.4180 \\
5.0 & 0.2300 & 0.3954 & 0.2925 & 0.3222 \\
6.0 & 0.0494 & 0.2374 & 0.1452 & 0.2402 \\
\end{tabular} 
\caption{myopic vs. blinkered} 
\label{tbl:planhead-mb} 
\end{center} 
\end{table} 

\begin{table}[h] 
\begin{center} 
\begin{tabular}{l|r r r r}
$\sigma_o ~~ \backslash ~~C$ & 0.0005  &  0.0010 & 0.0015 & 0.0020 \\ \hline
3.0 & 0.0218 & 0.0307 & 0.1146 & -0.1044 \\
3.0 & -0.0502 & 0.0703 & 0.0598 & 0.0022 \\
3.0 & 0.0940 & -0.1508 & -0.0865 & 0.1432 \\
3.0 & 0.0146 & 0.1485 & -0.0505 & 0.2068 \\
\end{tabular} 
\caption{blinkered vs. exhaustive} 
\label{tbl:planhead-be} 
\end{center} 
\end{table} 

\begin{table}[h] 
\begin{center} 
\begin{tabular}{l|r r r r}
$\sigma_o ~~\backslash ~~C$ & 0.0005  &  0.0010 & 0.0015 & 0.0020 \\ \hline
3.0 & -0.0781 & -0.0167 & 0.0391 & 0.0125 \\
4.0 & 0.0000 & 0.0974 & 0.1848 & -0.0982 \\
5.0 & 0.0609 & -0.0002 & 0.0000 & 0.0000 \\
6.0 & 0.1272 & 0.0000 & 0.0000 & 0.0000 \\
\end{tabular} 
\caption{myopic vs. omni-myopic} 
\label{tbl:planhead-mo} 
\end{center} 
\end{table} 

\subsubsection {Dependencies between Items}

When the values of the items are linearly dependent, e.g. when:
$x_i = x_{i-1} + w$ with $w$ being a random variable distributed as $N(0,\sigma_w^2)$,
the VOI of series of observations of several items
can be greater than the sum of VOI of each observation. We examine the influence
of dependencies on the relative quality of the blinkered and omni-myopic schemes.

When there are no dependencies, i.e. $\frac {\sigma_o^2} {\sigma_w^2} = 0$, the blinkered
scheme is significantly better. But as $\sigma_w$  decreases,
the omni-myomic estimate performs better. Figure \ref{fig:exprm-dependency} shows the difference
between achieved utility of the blinkered and the myopic schemes with
dependencies. The experiment was run on a set of 5 items, with the prior estimate
$N(0,1)$, measurement precision $\sigma_o^2=4$, measurement cost $C=0.002$ and a budget
of 10 measurements. The results are averaged over 100 runs of the experiment.

\begin{figure}[ht] 
\includegraphics[scale=0.65,trim=0pt 15pt 0pt 50pt,clip]{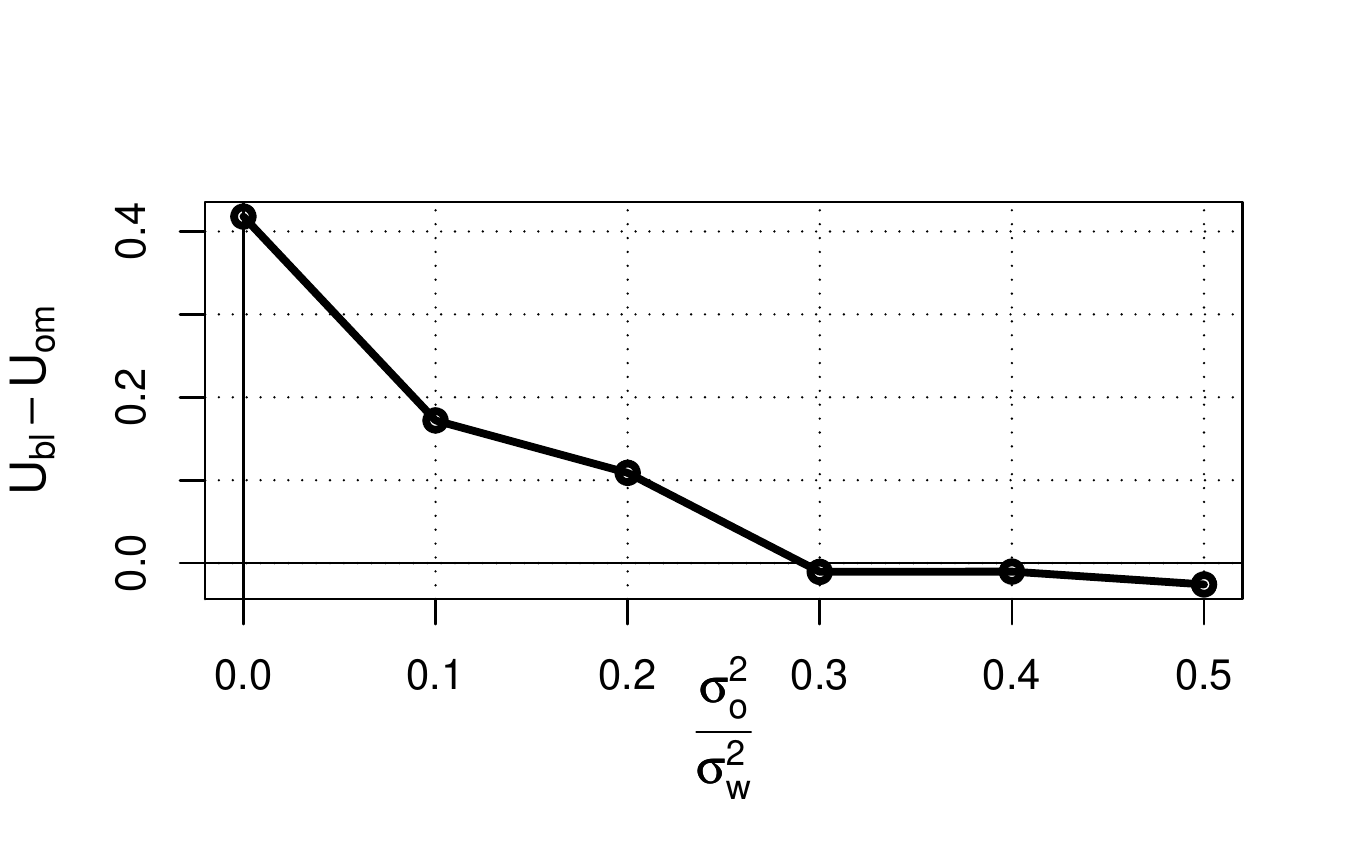}
\caption{Influence of dependencies} 
\label{fig:exprm-dependency}
\end{figure} 

In the absence of dependencies, the omni-myopic algorithm does not perform measurements
and chooses an item at random, thus performing poorly. 
As the dependencies become stronger, the omni-myopic
scheme collects evidence and eventually outperforms the blinkered scheme.
In the experiment, the omni-myopic scheme begins to outperform the blinkered scheme
when dependencies between the items are roughly half as strong as the 
measurement accuracy.
The experimental results thus encourage the use of the blinkered value of
information estimate in problems with increasing returns for certain combinations of parameters and
weak dependencies between the items.

\subsection{Results on Real Data}\label{sec:real-data}

Due to lack of space, we only outline some main aspects of an additional application of the 
selection problem -- parameter optimization for imaging
machines, which has items arranged as points on a multi-dimensional grid, with grid-structured
Markov dependencies. 
In this application one dimension was ``filter color'', and
another dimension was ``focal length index''. The utility of an item
is based on features observed in each image, and we examine results
of one case along just the focal length index dimension.
The utility function is a hyperbolic
tangent of the measured features. We assumed that ``items values'' were normally distributed,
and the dependencies were roughly estimated from the data, measurement variance
$\sigma_o^2 \approx 0.1$ and $\sigma_w^2 \approx 0.2$. 

\begin{figure}[ht] 
\includegraphics[scale=0.65,trim=0pt 15pt 0pt 50pt,clip]{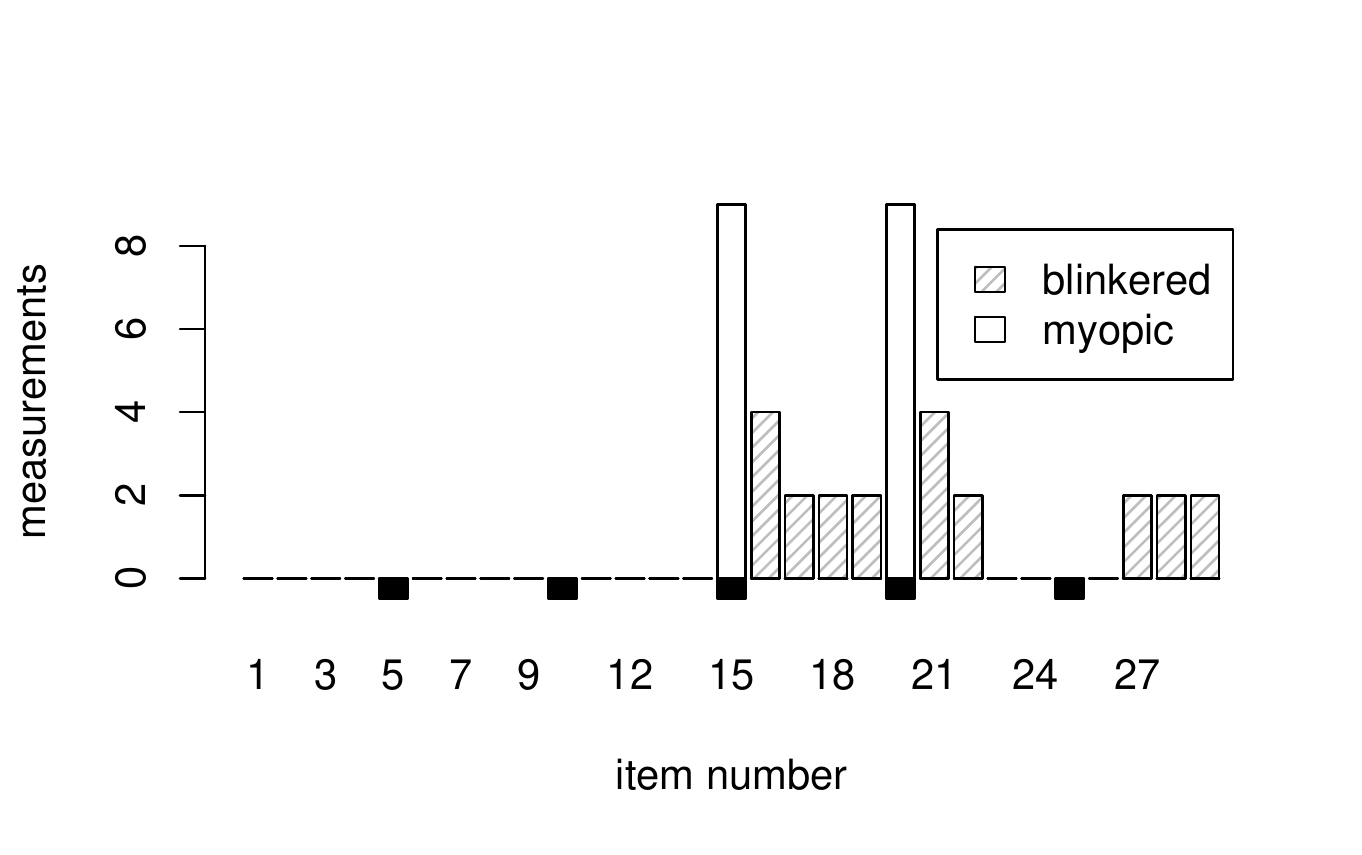}
\caption{Blinkered vs Myopic Measurements} 
\label{fig:real}
\end{figure} 

Figure \ref{fig:real} shows a summary of measurements made the blinkered scheme
vs. pure myopic, where initial beliefs (for most items -  variance was approximately 0.8)
due to some previous measurements
resulted in prior knowledge (smaller variance in beliefs: approximately 0.05) for
focal length indices marked with black boxes. The pure myopic scheme measured only items
with small variances, and eventually picked an inferior focus length index. The
blinkered scheme performed different measurements, ending up in selecting
the optimal index.

These results are for one typical data set. Unfortunately for this problem,
in experiments on real data, the result set is of necessity rather sparse, as
it is difficult to map the space of possibilities as was done for simulated data. 
Such an exploration would require us to predict, e.g. what would have happened
had the item value been different? What would have been the result had
we performed a measurement for (some unmeasured) item? Although the latter question
was handled in our system by physically performing numerous measurements
for all items, the former question is much more difficult to handle.


\section{RELATED WORK} \label {subsec:dis-related}


Limited rationality, a model of deliberation based on value of utility
revision and deliberation time cost was introduced in
\cite{Russell.right}.  Notions of value of
computation and its estimate were defined, as well as the class of meta-greedy
algorithms and simplifying assumptions under which the algorithms
are applicable. The theory of bounded optimality, on which the
approach is based, is further developed in \cite{Russell.bounded}.
\cite{Zilberstein.PHD} employs limited rationality techniques to analyze
any-time algorithms and proves optimality of myopic algorithm
monitoring under assumptions about the class of value and time cost
functions.

\cite{Guestrin.submodular} consider a greedy algorithm for observation
selection based on value of observation. They show that when values
of measurements for different items are mutually submodular and the
measurement cost is fixed, the greedy algorithm is nearly optimal.
The assumptions are inapplicable in our domain, necessitating an
extension of the pure greedy approach in our case.

In \cite{Heckerman.nonmyopic}, a case of discrete Bayesian
networks with a single decision node is analyzed. The authors propose
to consider subsequences of observations in the descending order of
their myopic value estimates. If any such subsequence has non-negative
value estimate, then the computation with the greatest myopic estimate
is chosen.  However, this approach 
always chooses a measurement for the myopically best item, and
when applied to the selection problem either looks at sequences of 
measurements on a single item
with the greatest myopic value estimate,
or, if sequences with one measurement per item are
considered, does not provide an improvement over the myopic estimate, for
our pathological example. Still, in may cases their scheme shows an improvement in
performance. \cite{Liao.nonmyopic} describes and
experimentally analyzes an algorithm for influence diagrams based on
a non-myopic VOI estimate.

Multi-armed bandits \cite{Vermorel.bandits} bear similarity
to the measurement selection problem, in particular, when the reward
distribution is continuous and unknown. Some of the algorithms,
e.g. POKER (Price of Knowledge and Estimated Reward) employ the notion
of value of information. However, most solutions concentrate on exploitation
of particular features of the value function, such as linear dependence of
reward from pushing a lever on the time left, and  do not facilitate
generalization. On the other hand, achievements in limited rationality
techniques should be helpful in development of improved solutions in this
domain.

\section {CONCLUSION} \label {sec:further}

We have introduced a new ``semi-myopic'' value of information
framework. An instance of semi-myopic scheme, called the blinkered scheme,
was introduced, and demonstrated to have positive impact on solving the selection
problem. Theoretical analysis of special cases provides some insights.
Empirical evaluation of the blinkered scheme on simulated data
shows that it is promising both for independent and for weakly dependent items.
A limited evaluation of an actual application also indicates that the
blinkered scheme is useful.

Still, properties of the estimate have been
investigated only partially for the dependent case,
which is of more practical importance.
In particular, when, due to sufficiently strong dependencies,
observations in different locations are not mutually submodular, the
blinkered estimate alone may not prevent premature termination of the
measurement plan, and its combination with the approach proposed in
\cite{Heckerman.nonmyopic} may be worthwhile.

During the algorithm analysis, several assumptions have been made about
the shape of utility functions and belief distributions. 
Certain special cases, such as normally distributed beliefs and convex utility
functions, are frequently met in applications and may lead to stronger
bounds and discovery of additional features of semi-myopic schemes.

An important application area of the selection problem, parameter optimization,
has items arranged as points on a multi-dimensional grid, with grid-structured Markov
dependencies. This special case has been partially investigated and
requires future work. Extending this case to points on a continuous grid
should also have numerous practical applications.


\subsubsection*{Acknowledgements} 
 
Partially supported by the IMG4 consortium under the MAGNET program, funded
by the Israel Ministry of Trade and Industry, and by the Lynne and William Frankel
center for computer sciences.

\bibliographystyle{plain}
\bibliography{refs}

\end{document}